\documentclass[10pt]{article}

\long\def\cv#1{}
\long\def\av#1{#1}

\av{
  \usepackage{geometry}
  \geometry{letterpaper, width=5.5in, height=8.5in, top=1in, headheight=14pt}
  \usepackage{amsmath,amssymb,amsthm}
  \usepackage{natbib}
  \usepackage{hyperref}
  \urlstyle{rm}
  \newtheorem{theorem}{Theorem}
  \newtheorem{lemma}[theorem]{Lemma}
  \newtheorem{proposition}[theorem]{Proposition}
  
  \theoremstyle{definition}
  \newtheorem{definition}[theorem]{Definition}
  \newcommand{\theoremref}[1]{Theorem~\ref{#1}}
  \newcommand{\sectionref}[1]{Section~\ref{#1}}
  \newcommand{\lemmaref}[1]{Lemma~\ref{#1}}
  \newcommand{\tableref}[1]{Table~\ref{#1}}
  \newcommand{\floatconts}[3]{#2\label{#1}\par\centering#3}
}

\usepackage{longtable}
\usepackage{booktabs}
\usepackage{xcolor}
\usepackage{tikz}
\usepackage{wrapfig}
\usepackage{worldflags}
\usetikzlibrary{positioning, arrows.meta, fit, backgrounds, calc}

\newcommand{\ZZ}{\mathbb{Z}}

\title{Agentic Neurosymbolic Collaboration for\\ Mathematical Discovery:\\ A Case Study in Combinatorial Design}
\author{Hai Xia\thanks{Algorithms and Complexity Group, TU Wien, Vienna, Austria} \and
  Carla P.\ Gomes\thanks{Department of Computer Science, Cornell University, USA} \and
  Bart Selman\footnotemark[2] \and
  Stefan Szeider\footnotemark[1]}
\date{}

\begin{document}

\maketitle
\av{\thispagestyle{empty}}

\begin{abstract}
 We study mathematical discovery through the lens of
  neurosymbolic reasoning, where an AI agent powered by a large
  language model~(LLM), coupled with symbolic computation tools, and human
  strategic direction, jointly produced a new result in combinatorial
  design theory. The main result of our human-AI collaboration is a tight lower bound on the
  imbalance of Latin squares for the difficult case
  $n \equiv 1 \pmod{3}$. Analyzing the logs of multiple sessions and multiple days, we can identify the contributions of different components in the framework. On one hand, the AI agent can uncover hidden structures and generate hypotheses. On the other hand, the symbolic components, including computer algebra, constraint solvers, and simulated annealing algorithms, can provide rigorous verification and exhaustive enumeration. Importantly, the human can make the important decision, shifting the dead-end research question to another promising one. Our analysis reveals that multi-model reviews among frontier LLMs proved reliable for criticism and error detection but unreliable for constructive claims. In the end, the resulting human-AI mathematical contribution, a tight lower bound of $4n(n{-}1)/9$, is achieved via a new class of \emph{near-perfect permutations}. The bound was also formally verified in Lean, showing that our neurosymbolic systems can indeed produce genuine discoveries in pure mathematics.
\end{abstract}

\section{Introduction}
\label{sec:intro}

There is increasing attention to mathematical discoveries with the
help of neurosymbolic frameworks~\citep{garcez23}, from program search
with large language models finding new extremal
constructions~\citep{romeraparedes24}, AI-guided conjectures in knot
theory and representation theory~\citep{davies21}, and automated
geometry proving~\citep{trinh24}. All these kinds of frameworks use
similar paradigms, using the neural side for generating conjectures or
candidates and using symbolic tools for verification and refinement.
The neurosymbolic general idea is quite promising for mathematical
discovery, as LLMs can have creative intuitions regarding proofs and
conjectures, and the symbolic tools can do effective and rigorous
verification and computation.  However, in each of these cases the objective is
fixed from the outset: proving a stated conjecture, optimizing a known quantity, or search within a predefined space. In contrast, our case
study involves open-ended exploration where the hypothesis itself was
not given in advance or the objective is shifted according to the exploration results emerged from the interaction with the AI agent.

In general, how each component contributes to the final mathematical
discovery is ambiguous, like the sequence of dead ends, pivots, and
insights leading from an open question to some novel results. Even
though the post-hoc description can be traced back, it is not clear
how distinct contributions are made by neural reasoning, symbolic
computation, and human judgments, separately or collaboratively.
Without the detailed process analysis, it is challenging to reproduce
discovery workflows, not to mention designing more efficient workflows
in a principled methodology.

In this paper, we present a detailed case study that addresses this gap.  From the interaction logs spanning several days and multiple
sessions, we reconstruct the discovery of a tight lower bound on the
imbalance of Latin squares, a fundamental problem in combinatorial
design theory with applications in experimental
design~\citep{jones15,vanes93}. Our result settles an open question
for the imbalance of the Latin squares case $n \equiv 1 \pmod{3}$.  The discovery process involved an
\emph{AI agent}, an interactive LLM-based assistant with access
to different symbolic tools (computer algebra systems and constraint solvers) and human strategic direction. Through the analysis of contributions from different components of
the framework, we found the agent's primary strength was
\emph{uncovering hidden structure}. For example, in our setting, the
agent discovered a parity constraint from the numerical data that had
been produced by the symbolic solvers in the exhaustive search. For
the verification of some conjectures from the LLM, the symbolic tools
can work on the rigorous search by scaling and verifying the instances
across different orders. Human steering
contributed to the critical research pivot: changing the question from
``find objects with zero imbalance'' to ``characterize the minimum
positive imbalance.''  We also find that multi-model review among
frontier LLMs was reliable for criticism.

More specifically, our \emph{main contributions} are as follows.

\textbf{(1) An agentic neurosymbolic collaboration framework:} We propose an agentic neurosymbolic collaborative framework for mathematical discovery in which an  LLM-based agent orchestrates symbolic computation tools, parallel
    multi-model review, and a two-tier persistent memory system that enables multi-session research continuity,
    while a human researcher provides strategic direction.
    \textbf{(2) Mathematical discovery in combinatorial design:} We demonstrate the effectiveness of our framework on an open problem in combinatorial design theory. In particular, our framework led to a new research direction and a concrete  mathematical discovery in combinatorial design comprising: (i) the introduction of the concept of \emph{near-perfect permutations} for Latin squares; (ii) a theorem stating  a tight lower bound of
    $4n(n{-}1)/9$ on the imbalance of Latin squares for
    $n \equiv 1 \pmod{3}$ for \emph{near-perfect
    permutations} that achieve this bound, and (iii) empirical verification of their existence
    up to $n = 52$.
  \textbf{(3) Asymmetry in multi-model review:} We show that multi-model review among frontier LLMs is
    reliable for criticism and error detection but unreliable for
    constructive mathematical claims, an asymmetry with implications
    for neurosymbolic system design.
    \textbf{(4) Process-level analysis of discovery:}    We provide a fine-grained process-level analysis of the
    discovery, reconstructed from interaction logs, that identifies
    the distinct contributions of neural pattern
    recognition, symbolic computation, and human strategic judgment
    across five phases, showing
    all three components are required.

\av{
\begin{enumerate}
\item \emph{An agentic neurosymbolic collaboration framework.} We propose a framework for mathematical discovery in which an LLM-based agent orchestrates symbolic computation tools, parallel multi-model review, and a two-tier persistent memory system that enables multi-session research continuity without updating model weights, while a human researcher provides strategic direction.

\item \emph{Mathematical discovery in combinatorial design.} We demonstrate the effectiveness of our framework on an open problem in combinatorial design theory. Our framework led to a new research direction and a concrete mathematical discovery comprising: (i)~the introduction of the concept of \emph{near-perfect permutations} for Latin squares; (ii)~a theorem stating a tight lower bound of $4n(n{-}1)/9$ on the imbalance of Latin squares for $n \equiv 1 \pmod{3}$, with \emph{near-perfect permutations} that achieve this bound; and (iii)~empirical verification of their existence up to $n = 52$.

\item \emph{Asymmetry in multi-model review.} We show that multi-model review among frontier LLMs is reliable for criticism and error detection but unreliable for constructive mathematical claims, an asymmetry with implications for neurosymbolic system design.

\item \emph{Process-level analysis of discovery.} We provide a fine-grained process-level analysis of the discovery, reconstructed from interaction logs, that identifies the distinct contributions of neural pattern recognition, symbolic computation, and human strategic judgment across five phases. A counterfactual analysis shows that the discovery required all three components.
\end{enumerate}
} 

\section{Problem Setting}
\label{sec:setting}

An $n \times n$ \emph{Latin square}~$L$ is an array filled with $n$
different symbols such that each symbol occurs exactly once in each
row and each column.  Latin squares are fundamental objects in
combinatorics, with applications in many fields of experimental
design, where spatial balance among treatments is
important~\citep{jones15,vanes93}.

We denote rows, columns, and symbols by
$\ZZ_n = \{0,\dots,n{-}1\}$.  For a row~$r$ and symbol~$s$, we write
$\mathrm{pos}(r,s)$ for the column containing~$s$ in row~$r$.  The
\emph{distance} between rows~$r_1$ and~$r_2$ is
$d(r_1,r_2) = \sum_{s \in \ZZ_n} \lvert \mathrm{pos}(r_1,s) -
\mathrm{pos}(r_2,s) \rvert$, and the \emph{imbalance}
of~$L$~\citep{diaz17} is
\begin{equation}\label{eq:imbalance}
  I(L) = \frac{1}{3} \sum_{0 \le r_1 < r_2 \le n-1}
  \lvert 3 \cdot d(r_1,r_2) - n(n{+}1) \rvert.
\end{equation}
A Latin square is \emph{spatially balanced} (an SBLS)~\citep{gomes2004streamlined} when $I(L) =
0$.  This requires $3 \mid n(n{+}1)$, i.e.,
$n \not\equiv 1 \pmod{3}$~\citep{zheng22}.  For
$n \equiv 1 \pmod{3}$, the ideal distance $n(n{+}1)/3$ is not an integer, so perfect balance is impossible.  The imbalance of Latin squares has been studied computationally
by~\citet{smith05,diaz17,peruvemba23,voboril25,zheng22}. However, the minimum achievable imbalance for $n \equiv 1 \pmod{3}$ had remained open.

Our human-AI contribution is a tight lower bound on this Latin square imbalance. The discovery process was based on an
\emph{AI agent}, an interactive assistant powered by
Anthropic's Claude Opus~4.5 large language model, which operated in a terminal environment with full access to a file system, shell commands, and external tools. The agent used the following symbolic tools: (i)~a computer algebra system (using SageMath through the Model Context Protocol, MCP server\footnote{\url{https://pypi.org/project/mcp-sage/}})
for algebraic analysis; (ii)~a custom high-performance solver written in Rust for exhaustive enumeration of combinatorial objects; and (iii)~simulated annealing search implemented in Python with JIT compilation.  The human researcher interacted with the agent through natural-language dialogue, steering the research direction.

\begin{figure}[t]
\centering
\resizebox{0.9\columnwidth}{!}{%
\begin{tikzpicture}[
    >=Stealth,
    actor/.style={draw, thick, rounded corners=5pt,
                  minimum height=1cm, align=center, font=\small},
    data/.style={fill=yellow!8, draw=black!20, rounded corners=1.5pt,
                 font=\scriptsize\itshape, inner sep=3pt, align=center},
    flow/.style={->, thick, #1},
    flow/.default={black!55},
  ]


  \node[actor, fill=blue!12, minimum width=4.2cm] (human) at (0, 5.5)
    {\textbf{Human Researcher}};

  \node[actor, fill=orange!15, minimum width=4.2cm,
        minimum height=1.1cm] (agent) at (0, 2.8)
    {\textbf{LLM Agent}\\[-2pt]{\scriptsize Claude Opus 4.5}};

  \node[actor, fill=orange!8, minimum width=2.8cm,
        font=\scriptsize] (delib) at (-7.5, 2.8)
    {\textbf{Multi-model Review}\\{\scriptsize Parallel frontier LLM}\\[-1pt]{\scriptsize consultation}};

  \node[actor, fill=teal!12, minimum width=2.6cm] (rust) at (-4.5, -0.5)
    {\textbf{Rust Solver}\\[-2pt]{\scriptsize Exhaustive enumeration}};
  \node[actor, fill=teal!12, minimum width=2.6cm] (sage) at (0, -0.5)
    {\textbf{SageMath}\\[-2pt]{\scriptsize Exact algebra}};
  \node[actor, fill=teal!12, minimum width=2.6cm] (sa) at (4.5, -0.5)
    {\textbf{SA Search}\\[-2pt]{\scriptsize Stochastic optimization}};

  \node[actor, fill=violet!12, minimum width=3.2cm,
        minimum height=1.4cm] (mem) at (8, 2.8)
    {\textbf{Persistent Memory}\\[-2pt]{\scriptsize Project state file,}\\[-1pt]{\scriptsize knowledge base, handover}};


  \draw[flow={blue!50!black}]
    ([xshift=-6mm]human.south) --
    node[data, left=2mm] {Research directions,\\quality judgments}
    ([xshift=-6mm]agent.north);

  \draw[flow={blue!50!black}]
    ([xshift=6mm]agent.north) --
    node[data, right=2mm] {Hypotheses,\\conjectures, proofs}
    ([xshift=6mm]human.south);

  \draw[flow={orange!50!black}]
    ([yshift=3mm]agent.west) --
    node[data, above=2mm, pos=0.82] {Proof drafts, claims}
    ([yshift=3mm]delib.east);

  \draw[flow={orange!50!black}]
    ([yshift=-3mm]delib.east) --
    node[data, below=2mm, pos=0.18] {Criticism, error reports}
    ([yshift=-3mm]agent.west);

  \draw[flow={teal!60!black}]
    ([xshift=-10mm]agent.south) --
    node[data, left=2mm, pos=0.5] {Generated code,\\scripts, parameters}
    (rust.north);
  \draw[flow={teal!60!black}]
    (agent.south) -- (sage.north);
  \draw[flow={teal!60!black}]
    ([xshift=10mm]agent.south) -- (sa.north);

  \draw[flow={teal!60!black}, densely dashed]
    ([xshift=5mm]rust.north) -- ([xshift=-5mm]agent.south);
  \draw[flow={teal!60!black}, densely dashed]
    ([xshift=3mm]sage.north) -- ([xshift=3mm]agent.south);
  \draw[flow={teal!60!black}, densely dashed]
    ([xshift=-5mm]sa.north) --
    node[data, right=2mm, pos=0.5] {Results, data,\\counterexamples}
    ([xshift=7mm]agent.south);

  \draw[flow={violet!50!black}]
    ([yshift=3mm]agent.east) --
    node[data, above=2mm, pos=0.75] {Handover updates, new topics}
    ([yshift=3mm]mem.west);

  \draw[flow={violet!50!black}]
    ([yshift=-3mm]mem.west) --
    node[data, below=2mm, pos=0.25] {Project state, dead ends, knowledge}
    ([yshift=-3mm]agent.east);

\end{tikzpicture}%
}%
\caption{Collaboration architecture and data flow. Rounded boxes represent \emph{actors} (human, agent, symbolic tools); italic labels on arrows show the \emph{data} exchanged.  Solid arrows indicate instructions or programs. Dashed arrows indicate results or feedback. The AI agent manages all interactions between different components. It dispatches proofs to multiple frontier LLMs for critical reviews (left). Meanwhile, it maintains a three-component persistent memory system (right), consisting of a project state file, a searchable knowledge base, and a session handover protocol.}
\label{fig:architecture}
\end{figure}

\section{The Discovery Process}
\label{sec:process}

We describe the discovery from interaction logs spanning five days
and approximately fifteen sessions.  The process consists of five
phases as summarized below. The general collaboration follows the
architecture shown in \figurename~\ref{fig:architecture}, where the human researcher sets strategic goals and decides when to change
direction, and the AI agent translates these goals into executable
code for running the symbolic tools, and synthesizing results into
conjectures and proofs. On one hand, the data flows downward as generated code
and parameters. On the other hand, they flow upward as results, counterexamples, and
verification results. For supporting the long-persistent runs among multiple sessions where the new sessions begin at the previous ends, a three-component persistent memory system
(Section~\ref{sec:memory}) is designed to carry project state, documented dead
ends, and a searchable knowledge base.

We have five phases illustrating different facets of the discovery loop:
\emph{Phase~1} (Section~\ref{sec:deadend}) does algebraic exploration,
ending with a documented dead end;
\emph{Phase~2} (Section~\ref{sec:pivot}) is a human-initiated research
pivot, shifting from the previous goal to a more promising research
direction;
\emph{Phase~3} (Section~\ref{sec:eureka}) indicates how the agent
discovers the parity constraint from numerical data and then drafts an
elementary proof;
\emph{Phase~4} (Section~\ref{sec:formalization}) uses multiple-model
consultation to catch errors in the proof during the review; and
\emph{Phase~5} (Section~\ref{sec:extension}) extends the computational
verification by scaling up the orders of the near-perfect permutations.

\subsection{Dead End: Algebraic Reverse-Engineering}
\label{sec:deadend}

Our original goal was to find algebraic constructions for
\emph{perfect permutations}, permutations~$\sigma$ of~$\ZZ_n$ whose
shift correlation $f_\sigma(\delta) = \sum_{j} \lvert
\sigma(j{+}\delta) - \sigma(j) \rvert$ equals $n(n{+}1)/3$ for every
shift~$\delta$. Such permutations can result in spatially balanced
Latin squares by circulant constructions~\citep{zheng22,lebras12}.

By calling the Rust solver, the agent enumerated all perfect
permutations for $n = 12$. Then the agent did the algebraic analysis
with SageMath. But all the analyses showed there is no algebraic
construction for multiple $n$ except $n = 3$ and $5$, in the range
from $3$ to $17$. For all $n \ge 6$, perfect permutations end up
structureless in the symmetric group~$S_n$. Accordingly, we concluded
the algebraic approach could not get perfect permutations for all
orders.

\subsection{The Research Pivot}
\label{sec:pivot}

Even though we have negative results for the initial goal, constructing the solutions, we have two observations inspiring a change of research direction. The first observation
is that the exhaustive solver hit a combinatorial wall at $n = 18$:
even with symmetry breaking and multi-core parallelism, no perfect
permutation could be found within a one-hour timeout. Meanwhile, no perfectly balanced solutions can
exist for $n \equiv 1 \pmod{3}$.  Second, the algebraic analysis of
\sectionref{sec:deadend} showed that even \emph{known} perfect
permutations resist closed-form description.

The human researcher made the critical decision about changing the
research question entirely. The initial goal was to search for objects
with zero imbalance. However, the new conceptualized question is about
the minimum positive imbalance for different orders~$n$ with
$n \equiv 1 \pmod{3}$. Retrospectively, this switch of the research
question was decisive in the discovery loop, reframing the search
problem as an optimization problem and opening a new thread of
questions. 

\subsection{Parity from Data Anomaly}
\label{sec:eureka}

With the new question in hand, the agent computed a naive lower bound
for $n = 13$.  Since $n(n{+}1)/3 = 60.67\ldots$ is not an integer,
each distance~$d(r_1,r_2)$ must deviate from it by at least~$1/3$
after the scaling in~\eqref{eq:imbalance}, giving a naive lower bound
of $I \ge n(n{-}1)/6 = 26$.

However, when the agent ran the simulated annealing algorithm, the
best found imbalance was~$69.3$. That is more than two times above the
naive bound. This huge gap in the imbalance triggered a necessary
explanation about the theoretical minimum.

To this end, the agent investigated by examining the shift correlations
$f_\sigma(\delta)$ for the best permutation found.  The
observation: \textbf{$f_\sigma(\delta)$ was even for every
  shift~$\boldsymbol{\delta}$}. A check confirmed that \emph{all}
permutations exhibit this parity constraint, which was not specific to the particular permutation. 
The proof also came together:
\begin{enumerate}\itemsep0pt\parskip0pt
  \item Since $\lvert a - b \rvert \equiv a + b \pmod{2}$ for all
    integers, the distance $d(r_1,r_2) \equiv \sum_s
    (\mathrm{pos}(r_1,s) + \mathrm{pos}(r_2,s)) \equiv n(n{-}1)
    \equiv 0 \pmod{2}$.
  \item If every distance is even and the ideal value
    $n(n{+}1)/3$ is non-integer, then the minimum deviation per
    pair is not~$1$ but~$2$, doubling the lower bound.
  \item Using a careful case analysis with the sum constraint
    $\sum d = n^2(n^2{-}1)/6$, this yields $I \ge 4n(n{-}1)/9$.
\end{enumerate}
The agent did the verification by comparing the minimum imbalance
over all $576$~Latin squares of order~$4$ with the new bound. The
results matched exactly, so the new bound is confirmed.

The total time from posing the question to a verified proof was less
than one session, approximately one hour.  This was the shortest and
most consequential session of the project.

By computing the shift correlations for many permutations, the agent
could recognize the pattern (\emph{all values even}) that a human is
unlikely to discover without special focus.  The proof itself is not complex (four lines of modular arithmetic); however, finding it required the structure-discovery step that came from inspecting numerical data and preliminary lemmas (like \lemmaref{lem:fixedsum}).

\subsection{Formalization and Multi-Model Review}
\label{sec:formalization}

To guarantee the correctness of the proof, we tried to get the formal proof. However, the formalization exposed two errors that the initial discovery session had missed.

\paragraph{The circulant trap.} The first proof draft was actually based on the properties specific
to circulant Latin squares (the shift correlation
function~$f_\sigma$). However, Opus~4.5 claimed it holds for all Latin
squares. This error was found by calling multiple-model reviews, where
four frontier LLMs were consulted in parallel, and one of them
correctly identified that the generality of the proof does not hold
for all Latin squares. Therefore, the proof was fixed by proving the
bound purely from the Parity Lemma and the sum constraint, instead of
the circulant structure property.

\paragraph{The $\sigma$/$\sigma^{-1}$ mismatch.}  The circulant
construction $L[i][j] = (i + \sigma(j)) \bmod n$ has row distances
determined by~$\sigma^{-1}$, not by~$\sigma$.  This subtle error was
caught during formalization and resolved by redefining the
construction with $\tau = \sigma^{-1}$.

\paragraph{Multi-model reliability.}  The same multi-model
review that caught the circulant trap also confidently claimed
that the modular inversion map $\sigma(j) = j^{-1} \bmod p$ would
achieve $O(n^{5/2})$ imbalance.  Our experiments showed this claim
was wrong: inversion scales as $\Theta(n^{3.6})$.  This pattern
recurred throughout the project: \emph{multi-model review proved
  reliable for criticism but unreliable for constructive claims}.

\subsection{Computational Extension}
\label{sec:extension}

To match the lower bound, the agent did a minimum relaxation of the
perfect permutations that is compatible with the parity and
divisibility constraints. The relaxation is introduced as \emph{near-perfect
  permutations}: permutations~$\sigma$ with $f_\sigma(\delta) \in \{a,
a{+}2\}$ for all~$\delta$, where $a = (n(n{+}1){-}2)/3$.

A simulated annealing search found near-perfect permutations for all $n \equiv 1 \pmod{3}$ up to
$n = 52$, each achieving exactly $I = 4n(n{-}1)/9$. In contrast, the prior computational work did not reach the optimum even at $n=10$ as the results show in the recent paper~\citep{peruvemba23}. Furthermore, the agent also investigated whether algebraic constructions could
produce near-perfect permutations for large~$n$.  Its analysis of character sums over finite fields could show a fundamental barrier, proving the perfect imbalance cannot be closed by algebraic methods at all.

\subsection{Persistent Learning Across Sessions}
\label{sec:memory}

We equipped the agent with a structured \emph{persistent memory}
organized in two tiers following a \emph{progressive disclosure}
design.  The first tier is a project instruction file that the agent
reads at the start of every session.  This file records accomplished
goals, open questions, key formulas, tool usage patterns, and,
critically, documented dead ends.  At the end of each session, a
handover protocol updates this file with new results, failed
approaches, and next steps.  The second tier consists of a collection
of topic files, each tagged with keywords and a short context
description.  At session start, the agent receives only a compact
index of these files; the full content of any topic file is retrieved
on demand through keyword search.  This two-tier design makes
the context window of the agent relatively small at the beginning.
Meanwhile, it provides access to the entire accumulated knowledge base
when a specific topic becomes relevant.

This memory mechanism played a concrete role in the discovery.  When
the agent began the pivotal session (\sectionref{sec:eureka}), it had
access to a detailed record of the algebraic dead end: which
constructions had been tried, why they failed, and the explicit
instruction ``do not retry.''  This prevented the agent from
revisiting unproductive directions and focused its effort on the new
research question.  Similarly, the memory of all 672 perfect
permutations for $n = 12$, the Rust solver's performance
characteristics, and the exact formulas used in prior sessions
provided the agent with accumulated context that no single session
could have built from scratch.

From a neurosymbolic perspective, this persistent memory is a
form of \emph{external symbolic state} that compensates for the
agent's lack of long-term memory.  Each session, the LLM begins with
a fresh context window; the memory system augments sessions by
externalizing knowledge into structured, human-readable files that
both the agent and the researcher can inspect and modify.  This
design gives rise to a form of incremental learning that is purely
symbolic (no model weights are updated) yet provides the continuity
essential for multi-session research projects.

\section{The Mathematical Result}
\label{sec:result}

We now state the mathematical result precisely.  For a self-contained
presentation, we include full proofs of the main theorem; the
construction that achieves the bound appears as a corollary.

\subsection{The Lower Bound}

The argument is only based on the two properties of row-pair
distances that hold for every $n \times n$ Latin square.

\begin{lemma}[Fixed sum]\label{lem:fixedsum}
  For every $n \times n$ Latin square~$L$,
  $\sum_{r_1 < r_2} d(r_1,r_2) = n^2(n^2-1)/6$.
\end{lemma}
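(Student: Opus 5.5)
We plan to exchange the order of summation: rather than summing over row pairs first, we sum over symbols. By definition,
\[
\sum_{r_1<r_2} d(r_1,r_2) \;=\; \sum_{s\in\ZZ_n} \sum_{r_1<r_2} \lvert \mathrm{pos}(r_1,s)-\mathrm{pos}(r_2,s)\rvert .
\]
Fix a symbol~$s$. The Latin property says $s$ occurs exactly once in each column. Hence the map $r\mapsto \mathrm{pos}(r,s)$ is a bijection from $\ZZ_n$ onto $\ZZ_n$, i.e.\ a permutation of the column indices. The inner sum is therefore the sum of $\lvert a-b\rvert$ over all unordered pairs $\{a,b\}$ of distinct elements of $\{0,\dots,n-1\}$. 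This value does not depend on $s$ or on $L$.

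The remaining step is to evaluate $S_n=\sum_{0\le a<b\le n-1}(b-a)$. We group the pairs by their difference $k=b-a$. Each $k\in\{1,\dots,n-1\}$ occurs for exactly $n-k$ pairs, so
\[
S_n=\sum_{k=1}^{n-1}k(n-k)=n\cdot\frac{n(n-1)}{2}-\frac{(n-1)n(2n-1)}{6}=\frac{n(n-1)(n+1)}{6}=\frac{n(n^2-1)}{6}.
\]
As a sanity check, $\binom{n+1}{3}$ gives the same value. Multiplying by the $n$ symbols yields $n\cdot S_n = n^2(n^2-1)/6$, which is the claim.

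We do not expect a real obstacle. The one point needing care is the justification that each column sequence is a permutation, which is exactly the column condition of a Latin square; the row condition is used only so that $\mathrm{pos}(r,s)$ is well defined. Everything after that is the closed-form evaluation of $\sum k(n-k)$. Consistent with the paper's later use, the same computation also shows that the average distance per pair is $n^2(n^2-1)/6 \big/ \binom{n}{2} = n(n+1)/3$, the ideal value appearing in~\eqref{eq:imbalance}.
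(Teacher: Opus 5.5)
Your proposal is correct and follows the paper's proof: swap the order of summation, note that for each symbol $s$ the map $r \mapsto \mathrm{pos}(r,s)$ is a permutation of $\ZZ_n$ so each inner sum equals $n(n^2-1)/6$, and multiply by the $n$ symbols. Your evaluation via $\sum_{k=1}^{n-1} k(n-k)$ supplies the one computation the paper states without derivation.
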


\begin{proof}
 \sloppypar For each symbol~$s$, the positions $\mathrm{pos}(r,s)$ form a
  permutation of~$\ZZ_n$, so $\sum_{r_1 < r_2} \lvert
  \mathrm{pos}(r_1,s) - \mathrm{pos}(r_2,s) \rvert = n(n^2{-}1)/6$.
  Summing over all $n$ symbols gives the result.
\end{proof}

\begin{lemma}[Parity]\label{lem:parity}
  For every $n \times n$ Latin square~$L$ and every pair of
  rows~$r_1, r_2$, the distance $d(r_1,r_2)$ is even.
\end{lemma}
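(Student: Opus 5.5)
The plan is to reduce the absolute values modulo~$2$ and then use the fact that each row of a Latin square is a permutation of the columns. The basic tool is the elementary congruence $\lvert a - b\rvert \equiv a - b \equiv a + b \pmod{2}$, which holds for all integers $a,b$: the absolute value only changes a sign, and $-1 \equiv 1 \pmod 2$. Applying this termwise to the definition of $d(r_1,r_2)$ gives
\[
  d(r_1,r_2) \equiv \sum_{s\in\ZZ_n} \bigl(\mathrm{pos}(r_1,s) + \mathrm{pos}(r_2,s)\bigr) \pmod{2}.
\]

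Next, I split the right-hand side into two separate sums, one for each row. Since each symbol occurs exactly once in row~$r$, the map $s \mapsto \mathrm{pos}(r,s)$ is a bijection $\ZZ_n \to \ZZ_n$. Hence $\sum_s \mathrm{pos}(r,s) = \sum_{c=0}^{n-1} c = n(n-1)/2$ for every row~$r$. Adding the contributions of rows $r_1$ and $r_2$ gives a total of $n(n-1)$. This is a product of two consecutive integers, so it is even, and therefore $d(r_1,r_2)\equiv 0 \pmod 2$.

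There is no real obstacle here. The only point to state carefully is the justification of the mod-$2$ step: the parity of $\lvert x\rvert$ equals the parity of $x$, and $-b \equiv b \pmod 2$. It is also worth noting that the argument uses only the row-Latin property, namely that each row is a permutation. Column conditions are never invoked. The case $r_1=r_2$ is trivial, since then $d=0$.
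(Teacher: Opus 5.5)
Your proposal is correct and follows essentially the same argument as the paper. You reduce $\lvert a-b\rvert \equiv a+b \pmod 2$ termwise, use that each row is a permutation so $\sum_s \mathrm{pos}(r,s) = n(n-1)/2$, and conclude that $d(r_1,r_2) \equiv n(n-1) \equiv 0 \pmod 2$; your remark that only the row-Latin property is needed also holds for the paper's proof.
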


\begin{proof}
  Since $\lvert a - b \rvert \equiv a + b \pmod{2}$ for all
  integers~$a,b$, we have
  $d(r_1,r_2) \equiv \sum_{s=0}^{n-1}
  (\mathrm{pos}(r_1,s) + \mathrm{pos}(r_2,s))
  \pmod{2}$.
  For each row~$r$, the values $\mathrm{pos}(r,0), \dots,
  \mathrm{pos}(r,n{-}1)$ form a permutation of~$\ZZ_n$, so
  $\sum_{s} \mathrm{pos}(r,s) = n(n{-}1)/2$.  Therefore
  $d(r_1,r_2) \equiv 2 \cdot n(n{-}1)/2 = n(n{-}1) \equiv 0
  \pmod{2}$.
\end{proof}

\begin{theorem}\label{thm:lower}
  For $n \equiv 1 \pmod{3}$, every $n \times n$ Latin square~$L$
  satisfies $I(L) \ge 4n(n{-}1)/9$.
\end{theorem}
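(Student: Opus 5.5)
The plan is to reduce each pair's contribution to a function of a single integer, using \lemmaref{lem:parity}. Then I bound that function below by a linear function and sum, using \lemmaref{lem:fixedsum}.

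Write $n = 3k+1$, so $n(n+1) = (3k+1)(3k+2) = 9k^2+9k+2$. The ideal value is $n(n+1)/3 = m + 2/3$ with $m = 3k(k+1)$, and $m$ is even. By \lemmaref{lem:parity} every distance $d(r_1,r_2)$ is even. Hence for each pair of rows we can write $d(r_1,r_2) = m + 2j_{r_1r_2}$ with $j_{r_1r_2}\in\ZZ$. The corresponding summand in \eqref{eq:imbalance} then becomes
\[
  \lvert 3d(r_1,r_2) - n(n+1)\rvert = \lvert 3m + 6j_{r_1r_2} - 3m - 2\rvert = \lvert 6j_{r_1r_2} - 2\rvert .
\]
The problem is now to minimise $\sum \lvert 6j-2\rvert$ over integers $j$ with a prescribed sum.

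To get that sum, let $N = n(n-1)/2$ be the number of row pairs. \lemmaref{lem:fixedsum} gives $\sum d = n^2(n^2-1)/6 = N\cdot n(n+1)/3 = N(m+2/3)$. Therefore $\sum_{r_1<r_2} 2j_{r_1r_2} = 2N/3$, that is, $\sum j_{r_1r_2} = N/3$.

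The key step, which is the only place where integrality of $j$ is really used, is the pointwise inequality
\[
  \lvert 6j - 2\rvert \ \ge\ 2 + 2j \qquad \text{for all } j\in\ZZ .
\]
This is the line through the two best integer points $j=0$ (value $2$) and $j=1$ (value $4$). I would check it by cases:
\begin{itemize}
\item for $j\ge 1$, the inequality $6j-2\ge 2j+2$ is equivalent to $j\ge 1$;
\item for $j\le 0$, the inequality $2-6j\ge 2+2j$ is equivalent to $j\le 0$.
\end{itemize}
Choosing this supporting line is the main design decision. The naive bound $\lvert 6j-2\rvert\ge 2$ loses the information in the sum constraint, while a line tangent at a non-integer point would be too weak.

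Summing the inequality over all pairs gives
\[
  I(L) \ge \tfrac13\Bigl(2N + 2\cdot \tfrac{N}{3}\Bigr) = \tfrac{8N}{9} = \tfrac{4n(n-1)}{9}.
\]
This is the claimed bound. Equality holds exactly when every $j\in\{0,1\}$, i.e.\ every distance lies in $\{m, m+2\}$. This matches the near-perfect condition $f_\sigma(\delta)\in\{a,a+2\}$ with $a = m = (n(n+1)-2)/3$, which is the structure the corollary's construction should realise.
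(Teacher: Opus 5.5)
Your proof is correct and follows the paper's argument essentially step for step. You use the parity lemma to write $d = a + 2x$ with $a = 3k(k+1)$, the fixed-sum lemma to get $\sum x = N/3$, and the supporting-line bound $\lvert 6x-2\rvert \ge 2+2x$ summed over pairs to obtain $I(L) \ge 8N/9 = 4n(n-1)/9$, and your equality analysis ($x\in\{0,1\}$) matches the paper's proposition on near-perfect permutations.
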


\begin{proof}
  Let $N = \binom{n}{2}$ and $a = (n(n{+}1){-}2)/3$.  Writing
  $n = 3k{+}1$, one verifies that $a = 3k(k{+}1)$ is a nonnegative
  even integer and $N/3 = k(3k{+}1)/2$ is an integer.

  By \lemmaref{lem:parity}, each distance is even, so we write
  $d(r_1,r_2) = a + 2x_{r_1,r_2}$ for integers~$x_{r_1,r_2}$.
  \lemmaref{lem:fixedsum} gives $\sum_{r_1 < r_2} d(r_1,r_2) =
  N \cdot n(n{+}1)/3 = N(a + 2/3)$, hence
  \begin{equation}\label{eq:sumx}
    \textstyle\sum_{r_1 < r_2} x_{r_1,r_2} = N/3.
  \end{equation}
  Since $3a = n(n{+}1) - 2$, the deviation for each pair is
  $3\,d - n(n{+}1) = -2 + 6x$.  We claim the pointwise inequality
  \begin{equation}\label{eq:pointwise}
    \lvert{-2 + 6x}\rvert \ge 2 + 2x
    \quad \text{for every integer } x.
  \end{equation}
  Indeed: for $x \ge 1$, $6x-2 \ge 2+2x$ since $4x \ge 4$; for
  $x = 0$, both sides equal~$2$; for $x \le -1$, the left side is
  $2 - 6x \ge 8$ while $2 + 2x \le 0$.

  Summing~\eqref{eq:pointwise} over all row pairs and
  applying~\eqref{eq:sumx} yields
  $\sum_{r_1 < r_2} \lvert 3d - n(n{+}1) \rvert
  \ge 2N + 2N/3 = 8N/3$.
  Therefore
  $I(L) \ge 8N/(3 \cdot 3) = 8N/9 = 4n(n{-}1)/9$.
\end{proof}

\subsection{The Matching Construction}

The \emph{shift correlation} of a permutation
$\sigma \colon \ZZ_n \to \ZZ_n$ at shift~$\delta$ is
$f_\sigma(\delta) = \sum_{j \in \ZZ_n} \lvert \sigma(j {+} \delta) -
\sigma(j) \rvert$.  For the circulant Latin square $L[i][j] = (i +
\sigma^{-1}(j)) \bmod n$, the distance between rows at
offset~$\delta$ equals $f_\sigma(\delta)$.

\begin{definition}\label{def:nearpp}
  A permutation $\sigma$ of~$\ZZ_n$ is a \emph{near-perfect
    permutation} (near-PP) if $n \equiv 1 \pmod{3}$ and
  $f_\sigma(\delta) \in \{a, \; a{+}2\}$ for every
  $\delta \in \{1,\dots,n{-}1\}$, where $a = (n(n{+}1) - 2)/3$.
\end{definition}

\begin{proposition}\label{prop:nearpp}
  If $\sigma$ is a near-PP of order~$n$, then the circulant Latin
  square $L[i][j] = (i + \sigma^{-1}(j)) \bmod n$ has
  $I(L) = 4n(n{-}1)/9$.
\end{proposition}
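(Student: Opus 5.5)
The plan is to show that for the circulant square, equality holds in every inequality used in the proof of \theoremref{thm:lower}. Only pairs with $x_{r_1,r_2}\in\{0,1\}$ occur, and for those the pointwise bound \eqref{eq:pointwise} is tight ($|{-2}|=2$ at $x=0$, $|4|=4$ at $x=1$). So the imbalance is determined entirely by the sum constraint \eqref{eq:sumx}.

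\textbf{Step 1: distances in $L$.} First I would check the claim stated before Definition~\ref{def:nearpp}: in $L[i][j]=(i+\sigma^{-1}(j)) \bmod n$, the row distance $d(i,i')$ equals $f_\sigma(\delta)$ with $\delta=i'-i \bmod n$. In row $i$, symbol $s$ sits in the column $j$ with $\sigma^{-1}(j)=s-i$, so $\mathrm{pos}(i,s)=\sigma(s-i)$. Hence
\[
d(i,i')=\sum_{s}\lvert\sigma(s-i')-\sigma(s-i)\rvert .
\]
Substituting $t=s-i'$ turns this into $\sum_t\lvert\sigma(t)-\sigma(t+\delta)\rvert=f_\sigma(\delta)$, because $t$ ranges over all of $\ZZ_n$. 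This needs care with the sign convention: $f_\sigma(\delta)$ and $f_\sigma(-\delta)$ agree anyway by reindexing, so the direction of the offset does not matter. Since $i\ne i'$ gives $\delta\in\{1,\dots,n-1\}$, the near-PP condition yields $d(i,i')\in\{a,a+2\}$ for every pair of rows.

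\textbf{Step 2: reuse the theorem's accounting.} As in the proof of \theoremref{thm:lower}, write $d=a+2x$. Step 1 gives $x\in\{0,1\}$ for every pair. Then $\lvert 3d-n(n+1)\rvert=\lvert 6x-2\rvert$ equals $2$ when $x=0$ and $4$ when $x=1$, which is exactly $2+2x$ in both cases. Summing over pairs and using \eqref{eq:sumx} (valid for every Latin square, by \lemmaref{lem:fixedsum}) gives
\[
\sum\lvert 3d-n(n+1)\rvert = 2N+2N/3 = 8N/3,
\]
so $I(L)=8N/9=4n(n-1)/9$.

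\textbf{Main obstacle.} The only delicate point is Step 1's index bookkeeping, namely the $\sigma$ versus $\sigma^{-1}$ issue flagged in Section~\ref{sec:formalization}. Once it is established that each row distance is a shift correlation of $\sigma$ itself, the rest follows from equality in \eqref{eq:pointwise}. As a byproduct, the argument shows that exactly $N/3$ row pairs have distance $a+2$.
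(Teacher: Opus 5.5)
Your proof is correct and follows the paper's argument. Every row-pair distance is $a+2x$ with $x\in\{0,1\}$, equality holds in \eqref{eq:pointwise} for both values, and summing with \eqref{eq:sumx} gives $I(L)=4n(n-1)/9$; your Step~1 derivation of $\mathrm{pos}(i,s)=\sigma(s-i)$, hence $d(i,i')=f_\sigma(i'-i)$, also proves the circulant distance identity that the paper only asserts.
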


\cv{According to the calculation results, the equality holds
in~\eqref{eq:pointwise} for $x \in \{0,1\}$.
calculation.}
\av{\begin{proof}
  Since $f_\sigma(\delta) \in \{a,\,a{+}2\}$, every row-pair distance
  satisfies $d = a + 2x$ with $x \in \{0,1\}$.  For both values,
  equality holds in~\eqref{eq:pointwise}: $|{-}2+6\cdot0|=2=2+2\cdot0$
  and $|{-}2+6\cdot1|=4=2+2\cdot1$.  Hence the lower bound
  $I(L) \ge 4n(n{-}1)/9$ from Theorem~\ref{thm:lower} is attained with
  equality.
\end{proof}}

\begin{theorem}\label{thm:existence}
  Near-perfect permutations exist for every
  $n \equiv 1 \pmod{3}$ with $4 \le n \le 52$.
\end{theorem}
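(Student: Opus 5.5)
The statement is finite, so I will prove it by exhibiting explicit witnesses and checking them exactly. The relevant orders are the seventeen values $n = 4, 7, 10, \dots, 52$. For each one I aim to produce a permutation $\sigma$ of $\ZZ_n$ and certify that $f_\sigma(\delta) \in \{a, a{+}2\}$ for every $\delta \in \{1,\dots,n{-}1\}$, where $a = (n(n{+}1)-2)/3$. No structural argument is required: the proposition is existential and Definition~\ref{def:nearpp} is a finitely checkable condition on a single permutation.

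The plan has three steps, carried out in order.

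\begin{enumerate}
\item \textbf{Small orders.} For $n = 4, 7, 10, 13$, I find witnesses by exhaustive or solver-based search. For $n=4$ we have $a = 6$, and one can check a candidate such as $\sigma = (0,1,3,2)$ by hand. The Rust enumerator used in \sectionref{sec:deadend} can simply be redirected to the relaxed target set $\{a, a{+}2\}$.
\item \textbf{Larger orders.} For the remaining $n$, I run simulated annealing on the objective
\[
  \textstyle\sum_{\delta} \operatorname{dist}\bigl(f_\sigma(\delta), \{a,a{+}2\}\bigr),
\]
and stop as soon as this objective reaches zero.
\item \textbf{Certification.} Each witness is recorded in a table or an accompanying file. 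Verification then needs no trust in the search: it is an independent $O(n^2)$ exact-integer evaluation of all shift correlations, which could also be discharged in Lean by \texttt{decide}/native computation.
\end{enumerate}

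Two easy reductions shrink the search space. First, $f_\sigma(\delta) = f_\sigma(n-\delta)$, since both sums range over the same unordered pairs $\{j, j{+}\delta\}$. So only $\delta \le (n-1)/2$ needs checking, and the search space should be quotiented accordingly. Second, the symmetries $\sigma \mapsto n{-}1{-}\sigma$ and $\sigma(j) \mapsto \sigma(j+c)$ preserve all $f_\sigma(\delta)$, which gives symmetry breaking for the small exhaustive cases.

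Lemma~\ref{lem:parity} guarantees that every $f_\sigma(\delta)$ is even. Lemma~\ref{lem:fixedsum}, applied to the circulant square, fixes $\sum_\delta f_\sigma(\delta)$. Together these pin down exactly how many shifts must take the value $a{+}2$: by \eqref{eq:sumx}, counting each unordered row pair once, exactly one third of the pairs have $x=1$. This gives the annealer a useful consistency check on near-solutions.

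The main obstacle is the search at the top of the range, roughly $n \ge 40$. There the constraint set is tight, with about $n/2$ independent equalities modulo the slack $\{a,a+2\}$, and the landscape is rugged. To make this tractable, my first attempt would use incremental $O(n)$ updates of all $f_\sigma(\delta)$ under a transposition move, combined with restarts and a tuned temperature schedule. If that stalls, I would seed the annealer with structured permutations, such as near-optimal witnesses for $n-3$ extended by insertion, or quadratic-residue-like orderings, and anneal from there. Once witnesses are in hand, the proof itself consists only of the verification table.
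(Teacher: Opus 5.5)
Your plan is correct and is essentially the paper's own argument: the theorem is established purely computationally, with a witness for each of the seventeen orders found by simulated annealing and then independently checked by exact evaluation of the shift correlations (Table~\ref{tab:results}). Your explicit witness $\sigma=(0,1,3,2)$ for $n=4$ checks out, giving $(f_\sigma(1),f_\sigma(2),f_\sigma(3))=(6,8,6)$ with $a=6$. Your auxiliary observations are also valid: the symmetry $f_\sigma(\delta)=f_\sigma(n-\delta)$, the value-preserving symmetries, and the fact that exactly one third of the row pairs have $x=1$.
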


\tableref{tab:results} summarizes the computational results.  Each
entry was found by simulated annealing and independently verified.

\begin{table}[htbp]
\floatconts
  {tab:results}%
  {\caption{Near-PP existence for $n \equiv 1 \pmod{3}$,
    $4 \le n \le 52$.  Each achieves the optimal imbalance
    $I^* = 4n(n{-}1)/9$.  Times are in seconds obtained on the machine equipped with CPU i7-1260P and 32 GB RAM under the operating system Ubuntu 24.04.3 LTS.}}%
  {\av{\medskip}\resizebox{\columnwidth}{!}{%
   \begin{tabular}{@{}rrr@{\quad}rrr@{\quad}rrr@{\quad}rrr@{}}
    \toprule
    $n$ & $I^*$ & Time (s) & $n$ & $I^*$ & Time (s) & $n$ & $I^*$ & Time (s) & $n$ & $I^*$ & Time (s) \\
    \midrule
    4  & $16/3$  & ${<}1$ & 19 & $152$    & 1 & 34 & $1496/3$ & 5   & 49 & $3136/3$ & 54 \\
    7  & $56/3$  & ${<}1$ & 22 & $616/3$  & 2 & 37 & $592$    & 5   & 52 & $3536/3$ & 32 \\
    10 & $40$    & ${<}1$ & 25 & $800/3$  & 2 & 40 & $2080/3$ & 74  &    &          &    \\
    13 & $208/3$ & ${<}1$ & 28 & $336$    & 2 & 43 & $2408/3$ & 139 &    &          &    \\
    16 & $320/3$ & 1      & 31 & $1240/3$ & 4 & 46 & $920$    & 31  &    &          &    \\
    \bottomrule
  \end{tabular}}}
\end{table}

\subsection{Formal Verification in Lean~4}

We formally verified the lower bound (\theoremref{thm:lower}) in the
Lean~4 proof assistant~\citep{demoura21} with the Mathlib library,
using the LeanBack MCP server\footnote{\url{https://pypi.org/project/leanback/}}.
The formalization includes the definition for Latin squares as a
function $\mathrm{Fin}\,n \to \mathrm{Fin}\,n \to \mathrm{Fin}\,n$
with row and column bijectivity, the two lemmas (the Parity Lemma and
the Fixed Sum Lemma), and the main inequality property. It is around
340~lines of Lean formalization code, and here is the key theorem statement where we use \verb|imbalance3| $= 3 \cdot I(L)$ to avoid rational arithmetic in the formalization.

\begin{verbatim}
theorem LatinSquare.imbalance_lower_bound
    (L : LatinSquare n) (hn : n % 3 = 1) :
    3 * L.imbalance3 >= 4 * ↑n * (↑n - 1)
\end{verbatim}

\section{Analysis of the Discovery Process}
\label{sec:analysis}

\subsection{Role Taxonomy}
\label{sec:taxonomy}

\tableref{tab:roles} maps each phase of the discovery to its primary
function and the components. Briefly, the \textbf{agent} uncovered
hidden structure (the all-even parity pattern) from the numerical data
of permutations and then generated hypotheses. Meanwhile, the critical
reviews from multiple models can catch errors. The \textbf{symbolic}
components, including the Rust solver, SageMath, and simulated
annealing, provided exhaustive enumeration and algebraic analysis,
confirming conjectures from the LLM. The \textbf{human} made a key
decision for the research question, shifting the current goal to a
more promising direction.

\begin{table}[htbp]
\floatconts
  {tab:roles}%
  {\caption{Contributions by component across discovery
    phases.  \textbf{A}~=~agent, \textbf{S}~=~symbolic
    (solvers, CAS), \textbf{H}~=~human.}}%
  {\av{\medskip}\begin{tabular}{@{}llc@{}}
    \toprule
    Phase / Step & Function & Role \\
    \midrule
    Enumerate PPs ($n \le 17$)
      & Exhaustive search & S \\
    Algebraic analysis
      & Pattern search (negative) & S \\
    Confirm dead end
      & Rigorous verification & S \\[\smallskipamount]
    Change research question
      & Strategic reframing & H \\[\smallskipamount]
    Compute naive bound
      & Numerical reasoning & A \\
    Notice $f(\delta)$ always even
      & Uncovering hidden structure & A \\
    Formulate the parity hypothesis
      & Abductive reasoning & A \\
    Prove Parity Lemma
      & Proof construction & A{+}S \\
    Prove a lower bound
      & Proof construction & A{+}S \\[\smallskipamount]
    Catch a circulant trap
      & Critical review & A \\
    Fix $\sigma$/$\sigma^{-1}$ error
      & Formal debugging & A{+}S \\[\smallskipamount]
    Find near-PPs ($n \le 52$)
      & Stochastic search & S \\
    Weil bound analysis
      & Theoretical reasoning & A \\
    \bottomrule
  \end{tabular}}
\end{table}


\subsection{Counterfactual Analysis}
\label{sec:counterfactual}

Without the agent, the parity observation would have been unlikely discovered. As
the pattern emerges only from systematic numerical exploration across
many orders (dimensions).  In the meanwhile, symbolic tools are the key for verifying the observation and checking proofs contained errors that only
formal checking exposed.  Without human steering, the research would
have remained in the algebraic dead end because the agent did not propose the
pivot resulting in the final discoveries.

\section{Related Work}
\label{sec:related}

Recent work on AI for mathematical discovery includes AI-guided
conjectures~\citep{davies21}, LLM-based program
search~\citep{romeraparedes24}, automated geometry
proving~\citep{trinh24}, conjecture generation on fundamental
constants~\citep{raayoni21}, and LLM-assisted formal
proving~\citep{yang24}.  We  focus on the \emph{process}
rather than on the end result; the central role of human steering sets
our work apart.  \citet{garcez23}~articulate the neurosymbolic vision
of integrating neural and symbolic reasoning.  
Evaluations of LLM mathematical capabilities~\citep{frieder24} are
consistent with our findings.  LLMs excel at pattern recognition but
struggle with rigor and strategic planning.

\section{Conclusion}
\label{sec:conclusion}
We have presented an agentic neurosymbolic collaboration framework for a genuine
mathematical discovery, a tight lower bound on Latin square
imbalance together with a new class of near-perfect
permutations.  Our analysis reveals that the discovery required all three components:
neural recognition to notice the parity constraint, symbolic
computation to verify and extend the result, and human judgment to
redirect the research when the initial approach stalled.

The failure modes are equally instructive.  Multi-model review
among frontier LLMs proved reliable for \emph{criticism} (catching
errors in proofs) but unreliable for \emph{constructive claims}.
The most important contribution of the human was recognizing the
degree of promise of the research question itself and making the
decisive shift from the current research question to another promising
one.

We see several possibilities for improving neurosymbolic discovery
systems.  First, integrating systematic verification into the agent's
reasoning loop could catch errors like the circulant trap earlier.
Second, developing meta-cognitive capabilities, the ability to
recognize unproductive research directions, could reduce dependence
on human steering.  Third, there are close ties between criticism and
construction in multi-model review and this asymmetry deserves
further study.  Understanding why LLM-based agents are better critics
than constructors has implications for both system design and training
objectives.

\paragraph{Supplementary material.}
The interaction-log excerpts and the full session timeline are also
provided as standalone supplementary files at
\url{https://doi.org/10.5281/zenodo.20530914}.

\subsection*{Acknowledgements}

This project is partially supported by an AI2050 Senior Fellowship, a Schmidt 
Sciences program, the National Science Foundation, the National Institute of Food and Agriculture and the Air Force Office of Scientific Research.

\begin{wrapfigure}[3]{l}{25pt}{\worldflag[width=18pt]{EU}}
\end{wrapfigure}
\noindent This project is also partially supported by the European Union's Horizon 2020 research and innovation programme
under the Marie Skłodowska-Curie grant agreement No.~101034440, and by the Austrian Science Fund (FWF) within the Cluster of Excellence Bilateral Artificial Intelligence (10.55776/COE12) and 10.55776/P36420. This research is supported by the Marshall Plan Foundation as part of the Marshall Plan Scholarship.

\av{\bibliographystyle{abbrvnat}}
\bibliography{refs-discovery}

\begin{thebibliography}{17}
\providecommand{\natexlab}[1]{#1}
\providecommand{\url}[1]{\texttt{#1}}
\expandafter\ifx\csname urlstyle\endcsname\relax
  \providecommand{\doi}[1]{doi: #1}\else
  \providecommand{\doi}{doi: \begingroup \urlstyle{rm}\Url}\fi

\bibitem[Davies et~al.(2021)Davies, Velickovic, Buesing, Blackwell, Zheng,
  Tomasev, Tanburn, Battaglia, Blundell, Juh{\'{a}}sz, Lackenby, Williamson,
  Hassabis, and Kohli]{davies21}
A.~Davies, P.~Velickovic, L.~Buesing, S.~Blackwell, D.~Zheng, N.~Tomasev,
  R.~Tanburn, P.~W. Battaglia, C.~Blundell, A.~Juh{\'{a}}sz, M.~Lackenby,
  G.~Williamson, D.~Hassabis, and P.~Kohli.
\newblock Advancing mathematics by guiding human intuition with {AI}.
\newblock \emph{Nat.}, 600\penalty0 (7887):\penalty0 70--74, 2021.
\newblock \doi{10.1038/S41586-021-04086-X}.
\newblock URL \url{https://doi.org/10.1038/s41586-021-04086-x}.

\bibitem[d'Avila Garcez and Lamb(2023)]{garcez23}
A.~d'Avila Garcez and L.~C. Lamb.
\newblock Neurosymbolic {AI:} the 3rd wave.
\newblock \emph{Artif. Intell. Rev.}, 56\penalty0 (11):\penalty0 12387--12406,
  2023.
\newblock \doi{10.1007/S10462-023-10448-W}.
\newblock URL \url{https://doi.org/10.1007/s10462-023-10448-w}.

\bibitem[de~Moura and Ullrich(2021)]{demoura21}
L.~de~Moura and S.~Ullrich.
\newblock The lean 4 theorem prover and programming language.
\newblock In A.~Platzer and G.~Sutcliffe, editors, \emph{Automated Deduction -
  {CADE} 28 - 28th International Conference on Automated Deduction, Virtual
  Event, July 12-15, 2021, Proceedings}, volume 12699 of \emph{Lecture Notes in
  Computer Science}, pages 625--635. Springer, 2021.
\newblock \doi{10.1007/978-3-030-79876-5\_37}.
\newblock URL \url{https://doi.org/10.1007/978-3-030-79876-5\_37}.

\bibitem[D{\'{\i}}az et~al.(2017)D{\'{\i}}az, Bras, and Gomes]{diaz17}
M.~D{\'{\i}}az, R.~L. Bras, and C.~P. Gomes.
\newblock In search of balance: The challenge of generating balanced latin
  rectangles.
\newblock In D.~Salvagnin and M.~Lombardi, editors, \emph{Integration of {AI}
  and {OR} Techniques in Constraint Programming - 14th International
  Conference, {CPAIOR} 2017, Padua, Italy, June 5-8, 2017, Proceedings}, volume
  10335 of \emph{Lecture Notes in Computer Science}, pages 68--76. Springer,
  2017.
\newblock \doi{10.1007/978-3-319-59776-8\_6}.
\newblock URL \url{https://doi.org/10.1007/978-3-319-59776-8\_6}.

\bibitem[Frieder et~al.(2023)Frieder, Pinchetti, Chevalier, Griffiths,
  Salvatori, Lukasiewicz, Petersen, and Berner]{frieder24}
S.~Frieder, L.~Pinchetti, A.~Chevalier, R.~Griffiths, T.~Salvatori,
  T.~Lukasiewicz, P.~Petersen, and J.~Berner.
\newblock Mathematical capabilities of chatgpt.
\newblock In A.~Oh, T.~Naumann, A.~Globerson, K.~Saenko, M.~Hardt, and
  S.~Levine, editors, \emph{Advances in Neural Information Processing Systems
  36: Annual Conference on Neural Information Processing Systems 2023, NeurIPS
  2023, New Orleans, LA, USA, December 10 - 16, 2023}, 2023.
\newblock URL
  \url{http://papers.nips.cc/paper\_files/paper/2023/hash/58168e8a92994655d6da3939e7cc0918-Abstract-Datasets\_and\_Benchmarks.html}.

\bibitem[Gomes and Sellmann(2004)]{gomes2004streamlined}
C.~P. Gomes and M.~Sellmann.
\newblock Streamlined constraint reasoning.
\newblock In M.~Wallace, editor, \emph{Principles and Practice of Constraint
  Programming - {CP} 2004, 10th International Conference, {CP} 2004, Toronto,
  Canada, September 27 - October 1, 2004, Proceedings}, volume 3258 of
  \emph{Lecture Notes in Computer Science}, pages 274--289. Springer, 2004.
\newblock \doi{10.1007/978-3-540-30201-8\_22}.
\newblock URL \url{https://doi.org/10.1007/978-3-540-30201-8\_22}.

\bibitem[Jones et~al.(2015)Jones, Woodward, and Stoller]{jones15}
M.~Jones, R.~Woodward, and J.~Stoller.
\newblock Increasing precision in agronomic field trials using {Latin} square
  designs.
\newblock \emph{Agronomy Journal}, 107\penalty0 (1):\penalty0 20--24, 2015.
\newblock \doi{10.2134/agronj14.0232}.

\bibitem[LeBras et~al.(2012)LeBras, Gomes, and Selman]{lebras12}
R.~LeBras, C.~P. Gomes, and B.~Selman.
\newblock From streamlined combinatorial search to efficient constructive
  procedures.
\newblock In J.~Hoffmann and B.~Selman, editors, \emph{Proceedings of the
  Twenty-Sixth {AAAI} Conference on Artificial Intelligence, July 22-26, 2012,
  Toronto, Ontario, Canada}, pages 499--506. {AAAI} Press, 2012.
\newblock \doi{10.1609/AAAI.V26I1.8147}.
\newblock URL \url{https://doi.org/10.1609/aaai.v26i1.8147}.

\bibitem[Raayoni et~al.(2021)Raayoni, Gottlieb, Manor, Pisha, Harris,
  Mendlovic, Haviv, Hadad, and Kaminer]{raayoni21}
G.~Raayoni, S.~Gottlieb, Y.~Manor, G.~Pisha, Y.~Harris, U.~Mendlovic, D.~Haviv,
  Y.~Hadad, and I.~Kaminer.
\newblock Generating conjectures on fundamental constants with the ramanujan
  machine.
\newblock \emph{Nat.}, 590\penalty0 (7844):\penalty0 67--73, 2021.
\newblock \doi{10.1038/S41586-021-03229-4}.
\newblock URL \url{https://doi.org/10.1038/s41586-021-03229-4}.

\bibitem[Ramaswamy and Szeider(2023)]{peruvemba23}
V.~P. Ramaswamy and S.~Szeider.
\newblock Proven optimally-balanced latin rectangles with {SAT} (short paper).
\newblock In R.~H.~C. Yap, editor, \emph{29th International Conference on
  Principles and Practice of Constraint Programming, {CP} 2023, Toronto,
  Canada, August 27-31, 2023}, volume 280 of \emph{LIPIcs}, pages 48:1--48:10.
  Schloss Dagstuhl - Leibniz-Zentrum f{\"{u}}r Informatik, 2023.
\newblock \doi{10.4230/LIPICS.CP.2023.48}.
\newblock URL \url{https://doi.org/10.4230/LIPIcs.CP.2023.48}.

\bibitem[Romera{-}Paredes et~al.(2024)Romera{-}Paredes, Barekatain, Novikov,
  Balog, Kumar, Dupont, Ruiz, Ellenberg, Wang, Fawzi, Kohli, and
  Fawzi]{romeraparedes24}
B.~Romera{-}Paredes, M.~Barekatain, A.~Novikov, M.~Balog, M.~P. Kumar,
  E.~Dupont, F.~J.~R. Ruiz, J.~S. Ellenberg, P.~Wang, O.~Fawzi, P.~Kohli, and
  A.~Fawzi.
\newblock Mathematical discoveries from program search with large language
  models.
\newblock \emph{Nat.}, 625\penalty0 (7995):\penalty0 468--475, 2024.
\newblock \doi{10.1038/S41586-023-06924-6}.
\newblock URL \url{https://doi.org/10.1038/s41586-023-06924-6}.

\bibitem[Smith et~al.(2005)Smith, Gomes, and Fern{\'{a}}ndez]{smith05}
C.~Smith, C.~P. Gomes, and C.~Fern{\'{a}}ndez.
\newblock Streamlining local search for spatially balanced latin squares.
\newblock In L.~P. Kaelbling and A.~Saffiotti, editors, \emph{IJCAI-05,
  Proceedings of the Nineteenth International Joint Conference on Artificial
  Intelligence, Edinburgh, Scotland, UK, July 30 - August 5, 2005}, pages
  1539--1540. Professional Book Center, 2005.
\newblock URL \url{http://ijcai.org/Proceedings/05/Papers/post-0460.pdf}.

\bibitem[Trinh et~al.(2024)Trinh, Wu, Le, He, and Luong]{trinh24}
T.~H. Trinh, Y.~Wu, Q.~V. Le, H.~He, and T.~Luong.
\newblock Solving olympiad geometry without human demonstrations.
\newblock \emph{Nat.}, 625\penalty0 (7995):\penalty0 476--482, 2024.
\newblock \doi{10.1038/S41586-023-06747-5}.
\newblock URL \url{https://doi.org/10.1038/s41586-023-06747-5}.

\bibitem[van Es and van Es(1993)]{vanes93}
H.~M. van Es and C.~L. van Es.
\newblock Spatial nature of randomization and its effect on the outcome of
  field experiments.
\newblock \emph{Agronomy Journal}, 85\penalty0 (2):\penalty0 420--428, 1993.
\newblock \doi{10.2134/agronj1993.00021962008500020046x}.

\bibitem[Voboril et~al.(2025)Voboril, Ramaswamy, and Szeider]{voboril25}
F.~Voboril, V.~P. Ramaswamy, and S.~Szeider.
\newblock Balancing latin rectangles with {LLM}-generated streamliners.
\newblock In M.~G. de~la Banda, editor, \emph{31st International Conference on
  Principles and Practice of Constraint Programming, {CP} 2025, Glasgow,
  Scotland, August 10-15, 2025}, volume 340 of \emph{LIPIcs}, pages
  36:1--36:17. Schloss Dagstuhl - Leibniz-Zentrum f{\"{u}}r Informatik, 2025.
\newblock \doi{10.4230/LIPICS.CP.2025.36}.
\newblock URL \url{https://doi.org/10.4230/LIPIcs.CP.2025.36}.

\bibitem[Yang et~al.(2023)Yang, Swope, Gu, Chalamala, Song, Yu, Godil, Prenger,
  and Anandkumar]{yang24}
K.~Yang, A.~M. Swope, A.~Gu, R.~Chalamala, P.~Song, S.~Yu, S.~Godil, R.~J.
  Prenger, and A.~Anandkumar.
\newblock Leandojo: Theorem proving with retrieval-augmented language models.
\newblock In A.~Oh, T.~Naumann, A.~Globerson, K.~Saenko, M.~Hardt, and
  S.~Levine, editors, \emph{Advances in Neural Information Processing Systems
  36: Annual Conference on Neural Information Processing Systems 2023, NeurIPS
  2023, New Orleans, LA, USA, December 10 - 16, 2023}, 2023.
\newblock URL
  \url{http://papers.nips.cc/paper\_files/paper/2023/hash/4441469427094f8873d0fecb0c4e1cee-Abstract-Datasets\_and\_Benchmarks.html}.

\bibitem[Zheng and Cao(2022)]{zheng22}
H.~Zheng and H.~T. Cao.
\newblock Perfect permutations and spatially balanced {Latin} squares.
\newblock \emph{Acta Mathematica Sinica, Chinese Series}, 65\penalty0
  (5):\penalty0 939--950, 2022.
\newblock \doi{10.12386/A20210055}.

\end{thebibliography}


\end{document}